\title[Learning from CaTL+]{CatlNet: Learning Communication and Coordination Policies from CaTL+ Specifications}
\newtheorem{problem}{Problem}
\author{%
 \Name{Wenliang Liu} \Email{wliu97@bu.edu}\\
 \addr Boston University, Massachusetts, USA
 \AND
 \Name{Kevin Leahy} \Email{kevin.leahy@ll.mit.edu}\\
 \addr MIT Lincoln Laboratory, Lexington, MA, USA
 \AND
  \Name{Zachary Serlin} \Email{zachary.serlin@ll.mit.edu}\\
 \addr MIT Lincoln Laboratory, Lexington, MA, USA
 \AND
 \Name{Calin Belta} \Email{cbelta@bu.edu}\\
 \addr Boston University, Massachusetts, USA
}
\begin{document}

\maketitle

\begin{abstract}%
 In this paper, we propose a learning-based framework to simultaneously learn the communication and distributed control policies for a heterogeneous multi-agent system (MAS) under complex mission requirements from Capability Temporal Logic plus (CaTL+) specifications. Both policies are trained, implemented, and deployed using a novel neural network model called CatlNet. Taking advantage of the robustness measure of CaTL+, we train CatlNet centrally to maximize it where network parameters are shared among all agents, allowing CatlNet to scale to large teams easily. CatlNet can then be deployed distributedly. A plan repair algorithm is also introduced to guide CatlNet's training and improve both training efficiency and the overall performance of CatlNet. The CatlNet approach is tested in simulation and results show that, after training, CatlNet can steer the decentralized MAS system online to satisfy a CaTL+ specification with a high success rate. 
\end{abstract}

\begin{keywords}%
  multi-agent systems, temporal logic, model-based reinforcement learning, distributed control, communication%
\end{keywords}

\section{Introduction}
\label{sec:intro}

Many real-world missions require the coordination of a heterogeneous Multi-Agent System (MAS). As collective tasks become increasingly complex, the need for an efficient way to define these tasks for a MAS becomes more and more stringent. This is especially true in cases where agents are controlled using learning-based methods (the focus of this paper), which may be advantageous for large MAS where coordination solutions are difficult to compute in real-time. Due to their expressivity and similarity to natural languages, temporal logics (such as Linear Temporal Logic (LTL) \cite{pnueli1977temporal} and Signal Temporal Logic (STL) \cite{maler2004monitoring}) have been widely used as specification languages for control systems. More recently, some work has focused on specifically tailoring temporal logics for MAS \cite{sahin2017provably,sahin2019multirobot,leahy2021scalable}. 


In this paper, we focus on Capability Temporal Logic plus (CaTL+) \cite{liu2022robust}, which specifies rich task requirements with concrete temporal constraints for heterogeneous MAS. The agents can have different \emph{capabilities} of servicing \emph{tasks}. Besides qualitative semantics (whether requirements are satisfied), CaTL+ is also equipped with quantitative semantics, also called \emph{robustness}, which is a continuous real number that measures how strongly the requirements are satisfied. 
Taking advantage of this, controlling a MAS to satisfy a CaTL+ specification can be formulated as an optimization problem with the robustness as the objective function. In \cite{liu2022robust}, this problem was solved in one shot and results in an open-loop controller. However, computing this controller is time-consuming, and the open-loop controller is vulnerable under disturbances.  

In this paper we propose a learning-based framework to train a distributed control policy for each agent that collectively attempts to satisfy a given CaTL+ specification. By training the policy off-line, each agent can compute a feedback control in real-time. We assume that each agent can only observe its own state directly. However, satisfying a CaTL+ specification requires coordination of multiple agents, so communication is necessary. In practice, communication resources are usually limited, and how to utilize these resources is a challenging problem in itself. The framework in this paper jointly learns a  communication strategy (i.e., when and what each agent needs to communicate given limited bandwidth) together with a control policy. Under this communication  strategy, agents only communicate when necessary and transmit the most useful information. Both the control policy and communication strategy are implemented in a model that we call CatlNet, which consists of several neural networks (NNs). We train CatlNet with the centralized training and decentralized execution (CTDE) paradigm. The CatlNet parameters are shared for all agents in the training phase, so no additional parameters are needed when adding more agents, which makes the algorithm scalable for very large teams. 

Training the policy from scratch can be difficult especially when the task is complex. It has been shown in the literature (e.g., \cite{leung2022semi}) that expert demonstrations can help the optimizer converge. However, a dataset of expert demonstrations is not always available.  Hence, we design a repair scheme to fix the team trajectory generated by CatlNet such that it satisfies the CaTL+ specification. We use the repair algorithm to generate a dataset of satisfying trajectories to guide training, which is shown to improve the performance of the learned policies. 

The main contributions of this paper are twofold: (1) We propose a learning-based framework, called CatlNet, which can learn both the distributed control policy and the communication strategy given limited bandwidth to steer a MAS to satisfy a CaTL+ specification. (2) We designed a repair scheme to guide the training, which improves the performance of CatlNet. We show that the control and communication policies generated by this framework are reliable and computationally efficient.

\section{Related Work}
\label{sec:rw}

Controller synthesis from temporal logic specifications has gained significant attention in recent years. Roughly, existing approaches can be divided into two schools of thought: (1) Synthesis for LTL and fragments of LTL, which employ automata-based methods (see, e.g., \cite{belta2017formal}); (2) Synthesis for temporal logics defined over real-valued signals, such as STL, which can be formulated as optimization problems solved via Mixed Integer Programming (MIP) \cite{raman2014model}, \cite{sadraddini2015robust}
or gradient-based methods \cite{pant2017smooth}, 
\cite{gilpin2020smooth}. Both solution classes have also been extended to MAS. The authors in \cite{chen2011synthesis, schillinger2018simultaneous, kantaros2020stylus, luo2021abstraction} applied automata-based methods to synthesize distributed control policies from a global LTL specification. Logics specifically designed for MAS including counting LTL (cLTL) \cite{sahin2017provably}, cLTL+ \cite{sahin2019multirobot}, Capability Temporal Logic (CaTL) \cite{leahy2021scalable} and STL with integral predicates \cite{buyukkocak2021planning} have also been proposed; MIP is used for control synthesis. An extension of CaTL, called CaTL+, was proposed in \cite{liu2022robust}. Taking advantage of differentiable robustness, control synthesis from CaTL+ is solved using gradient-based methods. 
All the methods mentioned above either synthesize the control in one shot or compute the control online. Hence, they are computationally very expensive for large MAS, which prohibits their use for real-time control.

Learning-based methods can be used to move online computation offline, which enables real-time executions for the above methods. Reinforcement Learning (RL) was combined with automata-based methods \cite{li2019formal,cai2021modular} and optimization-based methods \cite{aksaray2016q,liu2021model} to synthesize control policies for a single agent systems under temporal logic specifications. Model-free RL has also been applied to MAS under LTL \cite{sun2020automata,hammond2021multi,zhang2022distributed} and STL \cite{muniraj2018enforcing} specifications. However, model-free RL requires a large number of trials to learn the policy, which might be infeasible in practice. In this paper, we apply model-based RL and assume the system model is known. Though not included in this work, the model can also be learned from data. This paper can be seen as an extension of \cite{liu2021model}, moving from a single agent system and STL to MAS and CaTL+. 

Improving the performance of NNs is investigated in \cite{ma2020stlnet} where a method called STLnet is proposed to project a sequence of NN outputs to satisfy an STL formula. However, STLnet is not designed for control systems and it cannot fix the controller given STL over states. Inspired by STLnet though, we repair the controls given by CatlNet to guide the training. 

Finally, this work can also be viewed in the context of multi-agent RL and distributed networks in which communication is learned. Due to the importance of communication in cooperative tasks, many RL frameworks that can simultaneously learn control and communication policies were proposed recently, such as DIAL \cite{foerster2016learning}, CommNet \cite{sukhbaatar2016learning}, BicNet \cite{peng2017multiagent} and ATOC \cite{jiang2018learning}. The most related framework to ours is ATOC, which is designed for homogeneous MAS with a given reward function. The communication architecture of our CatlNet is inspired from ATOC, and extends it for heterogeneous teams under CaTL+ specifications. Using CaTLNet, rewards are generated automatically from the CaTL+ formula.

\section{Preliminaries}
\label{sec:pre}
\subsection{System Model}

We use bold and calligraphic symbols to represent trajectories and sets, respectively. $|\mathcal{X}|$ is the cardinality of a set $\mathcal{X}$. Consider a team of agents labelled from a finite set $\mathcal{J}$, where $j \in \mathcal{J}$ denotes an agent's index. We assume that all agents share the same state space $\mathcal X\subseteq\mathbb{R}^{n_{x}}$ and discrete time dynamics (a relaxation to this will be discussed in Remark~\ref{rm:dyn}):
\vspace{-2pt}
\begin{equation}
    \label{eq:system}
    x_{j}(t+1) = x_{j}(t) + u_{j}(t),\quad t=0,1,\ldots,H-1,
    \vspace{-2pt}
\end{equation}
where $x_{j}(t)\in\mathcal X$ and $u_{j}(t)\in\mathcal U_j\subset\mathbb R^{n_u}$ are the state and control at time $t$, $\mathcal U_j$ is the control space of agent $j$, and $H$ is a finite time horizon determined by the mission specification. Each agent $j$ is assumed to have a random initial state in $\mathcal X_{j,0}\subset\mathcal{X}$. Let $P_j: \mathcal X_{j,0} \rightarrow \mathbb R$ be the probability density function of the initial state $x_{j}(0)$. Consider a finite set of capabilities $Cap$ for team $\mathcal{J}$. Each agent has its own set of capabilities $Cap_j\subseteq Cap$. We assume that $\cup_{j\in \mathcal{J}}Cap_j= Cap$. 

The trajectory of an agent $j$, called an \emph{individual trajectory}, is a sequence $\mathbf{x}_j=x_{j}(0) \ldots x_{j}(H)$.  Then \emph{team trajectory} is defined as a set of pairs $\mathbf X = \{(\mathbf x_j,Cap_j)\}_{j\in\mathcal J}$, which captures all the \emph{individual trajectories} with their corresponding capabilities. Here we include capabilities in a team trajectory so that we can define the semantics of CaTL+ on it, as it will be shown later. Let $\mathcal J_c = \{j\;|\;c\in Cap_j\}$ be the set of agent indices with capability $c$. Let $\mathbf u_j = u_{j}(0)\ldots u_{j}(H-1)$ be the sequence of controls for agent $j$, $\bar x(t) = [x_j(t)]_{j=1}^{|\mathcal J|}$ and $\bar u(t) = [u_j(t)]_{j=1}^{|\mathcal J|}$ be the joint state and control of the MAS at time $t$. Denote $\mathbf x_{0:t}^j =x_{j}(0) \ldots x_{j}(t)$ and $\mathbf{\bar{x}}_{0:t} = \bar x(0), \ldots, \bar x(t)$.

\begin{remark}
\label{rm:dyn}
We simplify each agent's dynamics to a single integrator as in \eqref{eq:system} such that all agents can be controlled by CatlNet (described in Sec.~\ref{sec:catlnet}) with same parameters. In fact, these dynamics can be seen as high level nominal dynamics used to generate a sequence of waypoints. The true dense-time dynamics of the agents can be heterogeneous, as long as all agents share a common workspace $\mathcal X$ (not necessarily the state space). By properly selecting the control constraint $\mathcal U_j$, we can find a local controller for each agent that tracks the individual trajectory (waypoints) within given time window and avoids inter-agent collision, using the techniques in \cite{sun2022multi}.
\end{remark}

\subsection{CaTL+ Syntax and Semantics}

Capability Temporal Logic plus (CaTL+) \cite{liu2022robust} is a two-layer logic that includes an inner logic and outer logic. The \emph{inner} logic defined over individual trajectories $\mathbf x$ (subscript $j$ omitted for simplicity) is identical to STL and has the following syntax:
\vspace{-2pt}
\begin{equation}
\label{eq:syntax-in}
\varphi:=True\;|\;\mu \; | \; \neg\varphi \; | \; \varphi_1\land\varphi_2 \; | \; \varphi_1\lor\varphi_2 \;|  \;  \varphi_1 \mathbf{U}_{[a,b]} \varphi_2,
\vspace{-2pt}
\end{equation}
where $\varphi$, $\varphi_1$ and $\varphi_2$ are inner logic formulas, $\mu$ is a \emph{predicate} in the form of $f(x(t))\geq 0$. We assume $f:\mathcal X\rightarrow \mathbb R$ is a differentiable function. $\neg$, $\land$, $\lor$ are the Boolean \emph{not}, \emph{conjunction} and \emph{disjunction} respectively. $\mathbf{U}_{[a,b]}$ is the temporal operator \emph{until}, where $\varphi_1 \mathbf{U}_{[a,b]} \varphi_2$ means ``$\varphi_2$ must become true at some time point in $[a,b]$ and $\varphi_1$ must stay true before that", $[a,b]$ are all integer time points between $a$ and $b$. Other temporal operators like \emph{eventually} $\mathbf F_{[a,b]}\varphi$ and \emph{always} $\mathbf G_{[a,b]}\varphi$ are defined as $\mathbf F_{[a,b]}\varphi = True\mathbf{U}_{[a,b]}\varphi$ and $\mathbf G_{[a,b]}\varphi = \neg\mathbf F_{[a,b]}\neg\varphi$, where $\mathbf F_{[a,b]}\varphi$ states that ``$\varphi$ becomes true at some time point in $[a,b]$" and $\mathbf G_{[a,b]}\varphi$ states that ``$\varphi$ stays true at all time points in $[a,b]$". An individual trajectory $\mathbf x$ satisfies a inner logic (STL) $\varphi$ at time $t$ is denoted as $(\mathbf x,t)\models \varphi$. 

The outer logic (with a slight abuse of terminology we refer it as CaTL+), which is defined over team trajectories, has similar syntax with STL, except for predicates $\mu$ are replaced by \emph{tasks} $T$:
\vspace{-2pt}
\begin{equation}
\label{eq:syntax-out}
\Phi:=True\;|\;T \; | \; \neg\Phi \; | \; \Phi_1\land\Phi_2 \; | \; \Phi_1\lor\Phi_2 \;|  \;  \Phi_1 \mathbf{U}_{[a,b]} \Phi_2,
\vspace{-2pt}
\end{equation}
where $\Phi$, $\Phi_1$ and $\Phi_2$ are CaTL+ formulas, $T=\langle\varphi,c,m\rangle$ is a \emph{task}, $\varphi$ is an inner logic formula, $c\in Cap$ is a capability, and $m$ is a positive integer. The other operators are the same as the ones in STL. A task is satisfied at time $t$ if and only if at least $m$ \emph{individual trajectories} of agents with capability $c$ satisfy $\varphi$ at time $t$. Formally, we define a counting function $n(\mathbf X,c,\varphi,t)$ to capture this:
\vspace{-2pt}
\begin{equation}
    n(\mathbf X,c,\varphi,t) = \sum_{j\in\mathcal J_c}I\big( (\mathbf x_j,t)\models \varphi \big),
    \vspace{-2pt}
\end{equation}
where $I$ is an indicator function, i.e., $I = 1$ if $(\mathbf x_j,t)\models \varphi$ and $I = 0$ otherwise. Then the team trajectory $\mathbf X$ satisfies $T$ at time $t$, denoted by $(\mathbf X, t)\models T$, if and only if $n(\mathbf X,c,\varphi,t) \geq m$.

CaTL+ not only has qualitative semantics, i.e., \emph{whether} $\mathbf X$ satisfies $\Phi$, but also has quantitative semantics (also called robustness), i.e., \emph{how much} the specification is satisfied or violated. Denote the (exponential) robustness of a CaTL+ formula $\Phi$ with respect to a team trajectory $\mathbf X$ at time $t$ as $\eta(\mathbf X,\Phi,t)$, which is differentiable almost everywhere. The detailed definition of $\eta$ can be found in \cite{liu2022robust}. The robustness of CaTL+ is sound, i.e., $\eta(\mathbf X,\Phi,t)\geq0$ if and only if $(\mathbf X, t)\models \Phi$. The time horizon of a CaTL+ formula $\Phi$, denoted by $hrz(\Phi)$, is defined as the closest future time point that is needed to decide the satisfaction of $\Phi$.

\begin{example}
\label{ex}
To provide a comparison, we use the earthquake emergency response scenario defined in \cite{liu2022robust}. The workspace $\mathcal X \subset \mathbb R^2$ is shown in Fig.~\ref{fig:results}(a). There are $4$ ground vehicles $j\in\{1,2,3,4\}$ and $2$ aerial vehicles $j\in\{5,6\}$, totaling $6$ robots indexed from $\mathcal{J} = \{1,2,3,4,5,6\}$. A bridge $B$ goes across a river $R$ in the area. All ground vehicles start from initial state $x_j(0)$ uniformly sampled in region $Init_g$ and have capabilities $Cap_j = \{``Delivery", ``Ground"\}$, $j\in\{1,2,3,4\}$. All the aerial vehicles have
initial state $x_j(0)$ uniformly sampled in the region $Init_a$ and have capabilities $Cap_j = \{``Delivery", ``Inspection"\}$, $j\in\{5,6\}$. 

Consider the following CaTL+ specifications: (1) $\Phi_1=\langle \mathbf F_{[0,8]} x\in C,\ ``Delivery",\ 6 \rangle$: $6$ agents with capability $``Delivery"$ should pick up supplies from region $C$ within $8$ time units; (2) $\Phi_2=\langle \mathbf F_{[0,25]} x\in V_1,\ ``Delivery",\ 3 \rangle \land \langle \mathbf F_{[0,25]} x\in V_2,\ ``Delivery",\ 3 \rangle$: $3$ agents with capability $``Delivery"$ should deliver supplies to the affected village $V_1$ and $V_2$ within $25$ time units, respectively; (3) $\Phi_3=\neg\langle x\in B,\ ``Ground",\ 1 \rangle \mathbf  U_{[0,5]}\langle x\in B,\ ``Inspection",\ 2 \rangle$: any agent with capability $``Ground"$ cannot go over the bridge until $2$ agents with capability $``Inspection"$ inspect it within $5$ time units; (4) $\Phi_4=\mathbf G_{[0,25]}\langle \neg(x\in R),\ ``Ground",\ 4 \rangle$: agents with capability $``Ground"$ should always avoid entering the river $R$; (5) $\Phi_5=\mathbf G_{[0,25]}\neg\langle x\in B,\ ``Ground",\ 2 \rangle$: Since the load of the bridge is limited, at all times no more than $1$ agent with capability $``Ground"$ can be on $B$; (6) $\Phi_6=\mathbf G_{[0,25]}\langle x\in M,\ ``Delivery",\ 6 \rangle$: $6$ agents with capability $``Delivery"$ should always stay in region $M$.
The overall specification for the system is $\Phi = \bigwedge_{i=1}^6 \Phi_i$, with $hrz(\Phi)=25$. An example team trajectory is shown in Fig.~\ref{fig:map}, where $\Phi_1$ is satisfied because all $6$ agents enter $C$ while $\Phi_4$ is violated since a ground vehicle falls into $R$. 
\end{example}

\section{Problem Formulation and Approach}

Consider a team of agents $\mathcal J$ that needs to collaboratively satisfy a CaTL+ specification $\Phi$. We assume that: (1) each agent can only observe its own state $x_j(t)$ at each time $t$; (2) all agents have access to a communication channel for all times. At each time $t$, each agent can broadcast a vector $h_t^j\in\mathbb R^{n_c}$ to the channel and receive a vector $\tilde{h}_t^j\in\mathbb R^{n_c}$ from the channel. The dimension of the communication vectors are fixed because of the limited bandwidth of the channel. We formulate the joint control and communication synthesis problem as:
\begin{problem}
\label{pb:1}
Given a multi-agent system $\mathcal J$ with initial states $\{x_j(0)\in\mathcal X_{0,j}\ |\ j\in \mathcal J \}$ distributed as $P_j$
and a CaTL+ specification $\Phi$ defined over the team trajectory $\mathbf X$, find the control policy $u_j(t) = \pi_j(\mathbf x_{0:t}^j,h_t^j,\tilde h_t^j)$ and the communication vectors $h_t^j$ and $\tilde h_t^j$ that maximize the objective:\vspace{-2pt}
\begin{equation}
    \label{eq:pb1}
    \begin{aligned}
    \max_{\pi_j,h_t^j,\tilde h_t^j,\ j\in\mathcal J}\quad &\eta(\mathbf{X},\Phi,0) - \gamma \cdot \max\big(\eta(\mathbf{X},\Phi,0),0\big) \cdot\sum_{j\in\mathcal J}\sum_{t=0}^{H-1}C\big(\pi_j(\mathbf x_{0:t}^j,h_t^j,\tilde h_t^j)\big) \\
    \text{s.t.}\quad & x_{j}(t+1) = f_j(x_{j}(t),\pi_{j}(\mathbf x_{0:t}^j,h_t^j,\tilde h_t^j)),\\ &\pi_{j}(\mathbf x_{0:t}^j,h_t^j,\tilde h_t^j)\in\mathcal U_j,\ t=0,\ldots,H-1,
    \end{aligned}    
    \vspace{-2pt}
\end{equation}
where $\eta(\mathbf X,\Phi,0)$ is the CaTL+ robustness, $C(\cdot)$ is a cost function, $H\geq hrz(\Phi)$ is the planning horizon, and $\gamma$ is a parameter satisfying $\gamma\geq\sup_{u_j(t)\in\mathcal U_j}\sum_{j\in\mathcal J}\sum_{t=0}^{H-1}C(u_j(t))$.

In addition, our secondary objective is to minimize the total number of times that agents access the communication channel to save the energy cost on communication.
\end{problem}

Since the satisfaction of $\Phi$ is always a priority, the constraint on $\gamma$ ensures that the objective function has the same sign as $\eta(\mathbf X,\Phi,0)$, so minimizing the cost never overrides maximizing the robustness. Note that to determine an agent's control at time $t$, history states of the agent are needed due to the temporal requirements (as described in \cite{liu2021recurrent}).

A straightforward method to get $\pi_j$ in Pb.~\ref{pb:1} is to apply Model Predictive Control (MPC), i.e., compute a sequence of controls within a planning horizon and apply the first one to the system at each time. Since the objective function is defined on the entire team, the history information of all agents is required by the MPC controller at all time. This information needs to be either stored at a central node (taking up a large storage space) where $h_t^j=x_j(t)$, or sent from each agent to the communication channel (i.e., $h_t^j=\mathbf x_{0:t}^j$, which results in a large communication vector). Moreover, since the robustness of CaTL+ is nonconvex, solving the MPC problem at each time step can be time-consuming. Finally, MPC requires all agents to connect to the channel at all time steps, which is not going to accomplish our secondary objective. 
Hence, this MPC approach can be intractable in practice due to the limits on communication, storage and the real-time computation requirement. 

In this paper, we propose a learning-based algorithm (detailed in Sec.~\ref{sec:catlnet}) to solve Pb.~\ref{pb:1}. The algorithm finds the communication strategy, i.e., whether an agent needs to communicate at time $t$ and what the communication vectors $h_t^j$ and $\tilde{h}_t^j$ are, and a distributed control policy for each agent that can compute the control in real time. We propose a NN-based framework, called CatlNet, to implement both policies. CTDE paradigm is applied, i.e., we assume all state information is known during training. During execution however, each agent only observe its own state but it is connected to a communication channel as mentioned at the beginning of this section. After training, CatlNet can be generalized to random initial states $x_j(0)$ with known initial distribution $P_j$.

\vspace{-3pt}
\section{CatlNet}
\label{sec:catlnet}
\vspace{-3pt}
\subsection{Architecture of CatlNet}
\vspace{-3pt}

\begin{figure}
\centering
\begin{minipage}[t]{0.47\textwidth}
    \centering
    \includegraphics[height=5.4cm]{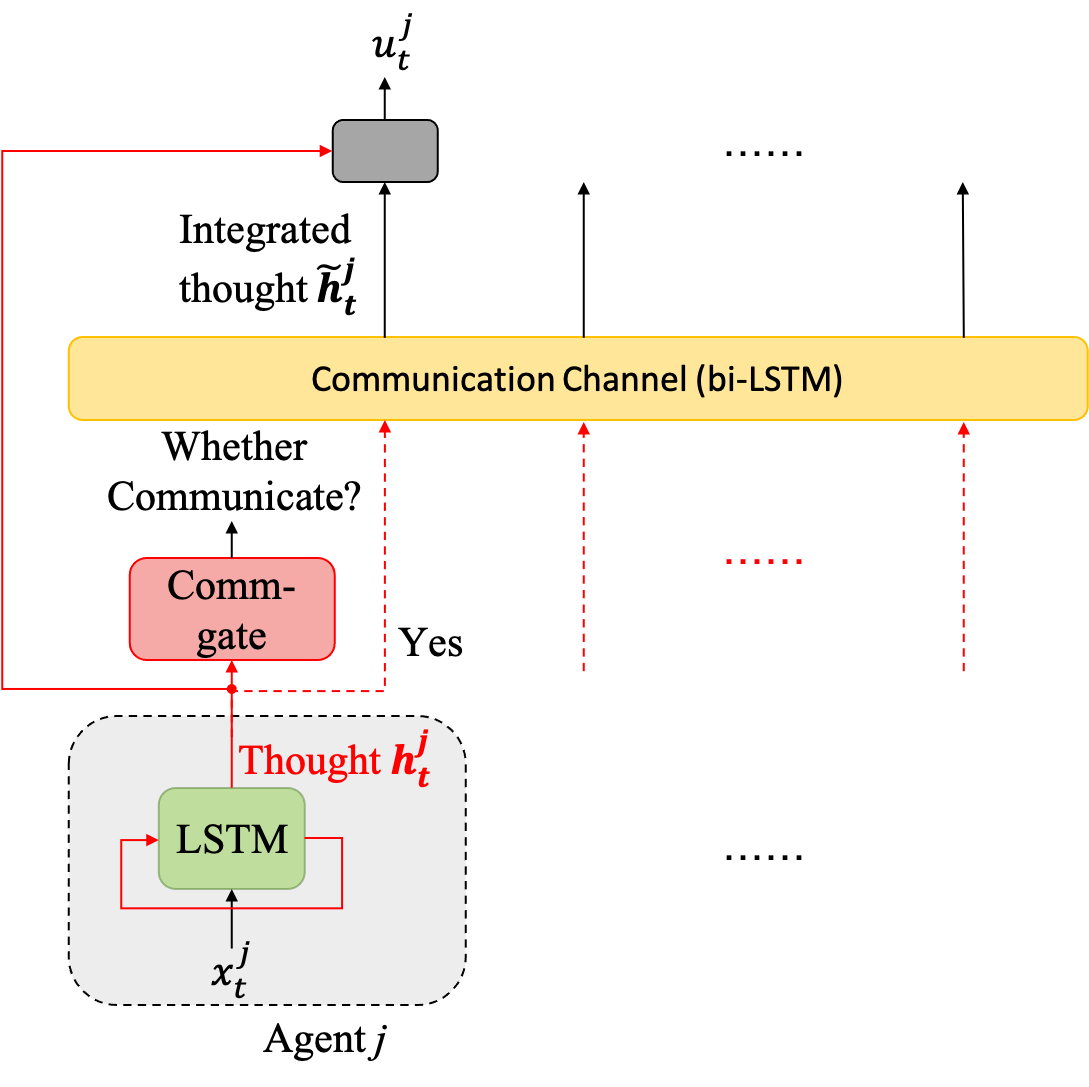}
    \caption{\vspace{-4pt}\small The overall architecture of CatlNet.}
    \label{fig:catlnet}
\end{minipage}
\quad
\begin{minipage}[t]{0.47\textwidth}
    \centering
    \includegraphics[height=5.4cm]{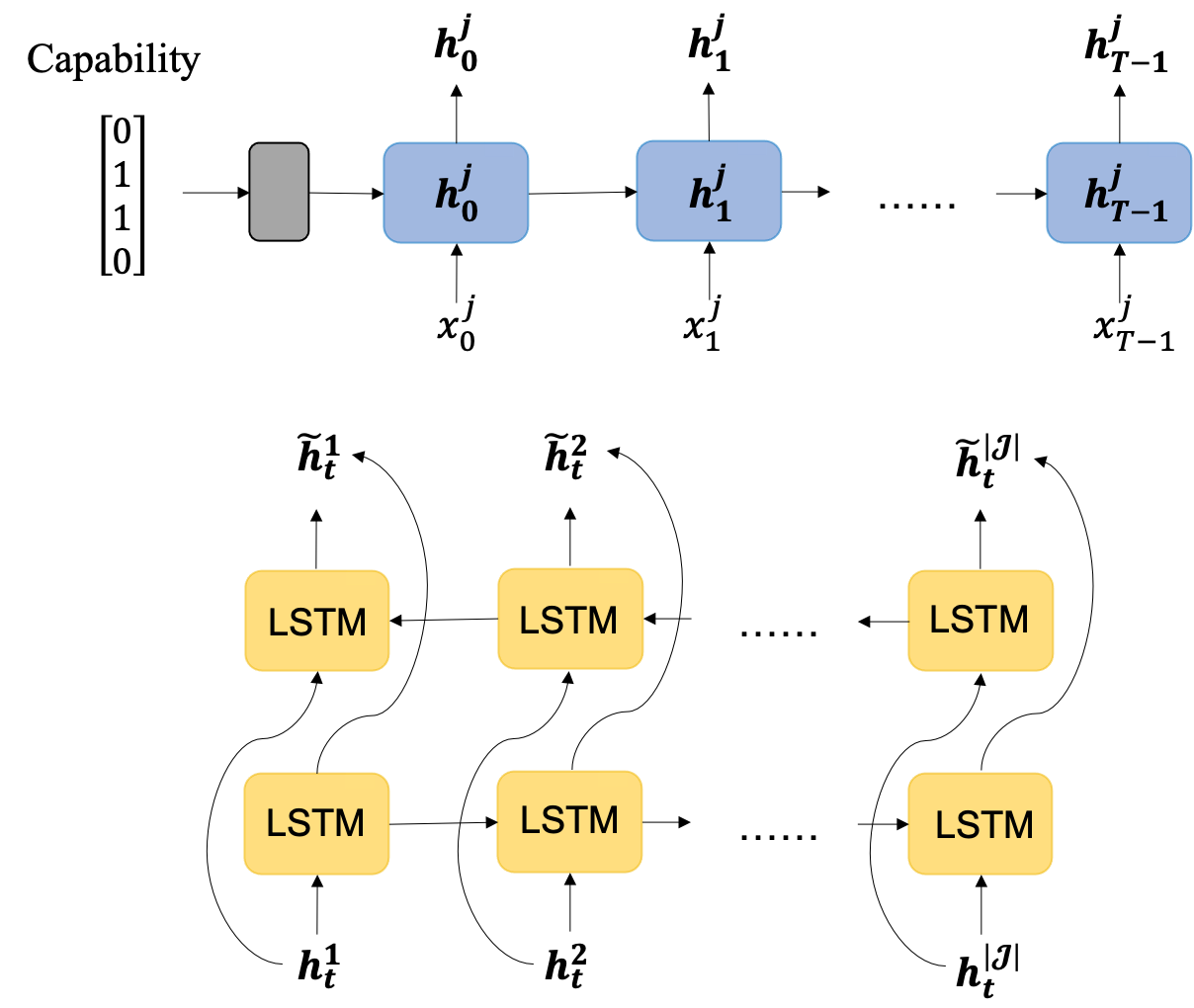}
    \caption{\vspace{-4pt}\small CapNet and unfolded LSTM (above), bi-directional LSTM (bottom).}
    \label{fig:lstm}
\end{minipage}
\vspace{-21pt}
\end{figure}

The overall architecture of CatlNet is shown in Fig.~\ref{fig:catlnet}. We extract a vector $h_t^j$ called \emph{thought} from the state of agent $j$ at time $t$ using a Long Short Term Memory (LSTM) NN with parameters $\theta_L$ (Fig~\ref{fig:lstm} above). Here the thought contains the information of history states. Let $Cap_j^v$ be the vectorized representation of agent $j$'s capability, where $Cap_j^v=[b_1\ b_2\cdots b_{|Cap|}]^\top$, $b_i=1$ if $c_i\in Cap_j$, $b_i=0$ if $c_i\not\in Cap_j$. We input $Cap_j^v$ to a NN called CapNet with parameters $\theta_{cap}$ and use its output as the initial hidden state of the LSTM as shown in Fig.~\ref{fig:lstm} (above). The output of the LSTM, i.e., the thought $h_t^j$ is then fed to a classifier, called a Comm-gate, to decide whether the agent communicates. The Comm-gate is also implemented as a NN with parameters $\theta_{g}$. If the Comm-gate decides to communicate, then the thought $h_t^j$ is passed to a communication channel implemented by a bi-directional LSTM with parameters $\theta_b$. The communication channel can merge all agents' thoughts (who decide to communicate at that time) and output an integrated thought $\tilde{h}_t^j$ that guides agents to generate coordinated actions as shown in Fig.~\ref{fig:lstm} (bottom). Then the integrated thought $\tilde{h}_t^j$ is sent back to agent $j$ and concatenated with its original thought $h_t^j$. Finally, $[h_t^j,\tilde{h}_t^j]$ is fed to another NN called OutNet with parameters $\theta_o$, and it outputs the control $u_j(t)$. A hyperbolic tangent function is applied at the last layer of OutNet to satisfy the constraint $u_j\in\mathcal U_j$ as in \cite{yaghoubi2019worst}. We denote the control policy given by CatlNet as $\bar u(t) = \bar\pi(\mathbf{\bar x}_{0:t},\theta_{cap},\theta_L,\theta_b,\theta_o,\theta_g)$.

Note that in this framework all agents share the same NN parameters ( $\theta_{cap}$, $\theta_L$, $\theta_g$, $\theta_o$), so the number of trainable parameters does not increase as the number of agents increases. Capabilities fed as initial hidden state to the LSTM and communication make agents behave differently.

\subsection{Training of CatlNet}
\label{sec:train}
Following the CTDE paradigm, CatlNet is trained as two parts: (1) the policy networks, including CapNet, LSTM, communication channel and OutNet; and (2) the Comm-gate. 
\vspace{-2pt}
\subsubsection{Training of the Policy Networks}
We initially ignore the Comm-gate network training and let all agents communicate openly to the channel, denoted as $\theta_g=\theta_g^{full}$. Since the control policy has been parameterized by CatlNet and we want to generalize CatlNet to different initial states, the first objective in Pb.~\ref{pb:1} becomes:
\begin{problem}
Given a multi-agent system $\{A_j\ |\ j\in \mathcal J \}$ and a CaTL+ specification $\Phi$ defined over the team trajectory $\mathbf X$, find the optimal CatlNet parameters $\theta_{cap}$, $\theta_L$, $\theta_b$, $\theta_o$ that maximizes the objective:
\vspace{-4pt}
\begin{equation}
    \label{eq:training}
    \begin{aligned}
    \max_{\theta_{cap},\theta_L,\theta_b,\theta_o}\quad &E_{P(x_j(0))}\big[\eta(\mathbf{X},\Phi,0) - \gamma \cdot \max\big(\eta(\mathbf{X},\Phi,0),0\big) \cdot\sum_{j\in\mathcal J}C(\mathbf u_j)\big] \\
    \text{s.t.}\quad &\bar u(t)=\bar\pi(\mathbf{\bar x}_{0:t},\theta_{cap},\theta_L,\theta_b,\theta_o,\theta_g^{full}),\\
    & x_{j}(t+1) = f_j(x_{j}(t),u_{j}(t)),\ t=0,\ldots,H-1.
    \end{aligned}    
    \vspace{-2pt}
\end{equation}
\end{problem}
In practice, we randomly sample $M$ initial states of the MAS and use the average to approximate the expectation in \eqref{eq:training}. We substitute the constraints to the objective function to make \eqref{eq:training} an unconstrained optimization problem and use the Adam stochastic optimizer \cite{kingma2014adam} to update $\theta_{cap}$, $\theta_L$, $\theta_b$, $\theta_o$. We resample $M$ initial states at each optimization step. Note that all gradients can be computed automatically and analytically using the technique in \cite{leung2020back}.

When the CaTL+ specification is complex, the policy may become stuck in a local optima that violates the specification. To improve training reliability, we consider how humans learn. 
(1) Given an objective, a human learner might achieve it with a suboptimal solution. On the other hand, if optimal demonstrations are provided, a learner who do not know the objective can imitate but might fail given unseen conditions. Hence, the best strategy is to provide the learner both the objective and the demonstrations. (2) Facing a bunch of unfamiliar demonstrations, it might be hard for a learner to discover the underlying rules. However, if a coach shows the learner a solution which is an adaptation of the learner's own behavior, it will be easier for the learner to improve her solution. Inspired by these two insights, we designed a repair algorithm (Alg.~\ref{alg:repair} described in Sec.~\ref{sec:repair}), which can fix the team trajectory generated by CatlNet to satisfy the CaTL+ specification. We use the repair algorithm to guide the training and further improve the policy. 

We use the trained CatlNet to generate a set of $N$ team trajectories starting from random initial states and collect the violating trajectories. Then we use Alg.~\ref{alg:repair} to repair them and collect all successfully repaired team trajectories to form a dataset $D=\{\mathbf X_{d}^{(i)}|i=1,\ldots, N\}$. 
Denote the objective in \eqref{eq:training} as $L(\theta_{cap},\theta_L,\theta_b,\theta_o,\theta_g)$. Then train CatlNet again to maximize the objective:
\vspace{-4pt}
\begin{equation}
    \label{eq:semi}
    \begin{aligned}
    \max_{\theta_{cap},\theta_L,\theta_b,\theta_o}\quad & (1-\beta)L(\theta_{cap},\theta_L,\theta_b,\theta_o,\theta_g^{full}) - \beta \sum_{i=1}^{N}\sum_{t=0}^{T-1} \| \bar x^{(i)}(t) - \bar x^{(i)}_d(t)\|^2 \\
    \text{s.t.}\quad & x_{j}^{(i)}(t+1) = f_j(x_{j}^{(i)}(t),u_{j}^{(i)}(t)),\quad x_j^{(i)}(0) = x_{j,D}^{(i)}(0),\quad i=1,\ldots,N,\\ 
    &\bar u^{(i)}(t)=\bar\pi(\mathbf {\bar x}_{0:t}^{(i)},\theta_{cap},\theta_L,\theta_b,\theta_o,\theta_g^{full}),\ t=0,\ldots,H-1,
    \end{aligned}    
    \vspace{-1pt}
\end{equation}
where $\beta\in[0,1]$ balances maximizing \eqref{eq:training} and imitating the dataset. Note that we can rearrange the identical agents in the team to minimize $\| \bar x^{(i)}(t) - \bar x^{(i)}_d(t)\|^2$ in \eqref{eq:semi}, which makes the dataset permutation-invariant. We keep repairing the violated trajectories, adding them to dataset $D$ and retrain CatlNet until convergence. Now we have obtained a control policy with full communication.

\vspace{-4pt}
\subsubsection{Training of the Comm-gate}
To train the Comm-gate that decides when an agent needs to communicate, we first use the objective \eqref{eq:semi} with the final dataset to train another CatlNet with no communication at all (likely it cannot satisfy the specification). Then we generate trajectories using the full communication CatlNet, but disable the communication channel connection for one agent $j$ at one time point $t$ at a time. The chosen agent $j$ will use the no communication CatlNet instead at the chosen time $t$. Then we compare the robustness values with and without this deactivation. If the deactivation makes the robustness decrease over a threshold, we label the thought $h_t^{j,(i)}$ with $y^{(i)}=1$ (the agent should communicate), otherwise we label $h_t^{j,(i)}$ with $y^{(i)}=0$ (the agent does not need communication). Repeat this from sampled initial states to form a dataset $D_g$ consist of data pairs $(h_t^{j,(i)},y^{(i)})$ that covers all agents at all time points. Let $G(h_t^j,\theta_g)\in\mathbb R^2$ be the output of Comm-gate. We train the Comm-gate on $D_g$ as a standard classifier to minimize the cross entropy loss:
\vspace{-3pt}
\begin{equation}
    \label{eq:class}
    \min_{\theta_g}\ \sum_{D_g} -y^{(i)}\log\Big( \sigma\big(G(h_t^{j,(i)},\theta_g)\big)_1\Big) - (1-y^{(i)})\log\Big(\sigma\big(G(h_t^{j,(i)},\theta_g)\big)_2\Big)
    \vspace{-4pt}
\end{equation}
where $\sigma:\mathbb R^2\rightarrow\mathbb R^2$ is the softmax function, $\sigma()_1$ and $\sigma()_2$ denote its 1st and 2nd elements. Finally, we train CatlNet with the Comm-gate using the objective \eqref{eq:semi}, which gives us the final CatlNet. 

\vspace{-3pt}
\subsection{Repair of CatlNet}
\label{sec:repair}
Now we describe the repair algorithm used in the training phase. The first step is to rewrite the outer logic of CaTL+ into a negation-free Disjunctive Normal Form (DNF, a disjunction of conjunctions). To do this, we extend the definition of the CaTL+ task. Define a \emph{timed task} as $\bar T=\langle\varphi,c,m\rangle_t$, such that $(\mathbf X, t_0) \models \bar T$ iff $(\mathbf X, t_0+t) \models \langle\varphi,c,m\rangle$. That is, a timed task $\bar T=\langle\varphi,c,m\rangle_t$ is required to be satisfied at time $t$. The original task $\langle\varphi,c,m\rangle$ is equivalent to $\langle\varphi,c,m\rangle_{t=0}$.

\vspace{-5pt}
\begin{proposition}
Every CaTL+ formula can be represented in the negation-free DNF: $ \bigvee_{k=1}^K\bigwedge_{i=1}^{I_k}\bar T_{i}^k$ where $\bar T_{i}^k$ is a timed task, $I_k$ is the number of conjunctions in the $k^{th}$ disjunction.
\end{proposition}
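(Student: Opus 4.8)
The plan is a structural induction on $\Phi$, carried out in three stages: (i) eliminate the outer temporal operators by unrolling them over the finite horizon, reducing $\Phi$ to a Boolean combination (using only $\neg,\land,\lor$) of timed tasks; (ii) push the remaining negations down to the timed-task leaves and absorb them, obtaining a negation-free (monotone) Boolean combination of timed tasks; and (iii) convert that monotone formula into DNF by distributing $\land$ over $\lor$. Throughout, "represented" means semantic equivalence on team trajectories, which I will in fact establish at every reference time $t_0$ (specializing to $t_0=0$ at the end).

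For stage (i), introduce a shift operator $S^{\tau}$ on CaTL+ formulas by declaring $(\mathbf X,t_0)\models S^{\tau}\Psi$ iff $(\mathbf X,t_0+\tau)\models\Psi$; note that $S^{\tau}$ commutes with $\neg,\land,\lor$ and that $S^{\tau}\langle\varphi,c,m\rangle_t=\langle\varphi,c,m\rangle_{t+\tau}$, so shifting a timed task again yields a timed task. I claim every CaTL+ formula is equivalent to a Boolean combination of timed tasks. The base cases are immediate: $True$ is such a combination, and $T=\langle\varphi,c,m\rangle=\langle\varphi,c,m\rangle_{0}$ is a timed task. The cases $\neg,\land,\lor$ are closed under "Boolean combination of timed tasks" by the induction hypothesis. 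The only real work is the until case: the standard bounded-time unrolling gives
\begin{equation}
\Phi_1\,\mathbf{U}_{[a,b]}\,\Phi_2 \;\equiv\; \bigvee_{i=a}^{b}\Big(S^{i}\Phi_2 \;\wedge\; \bigwedge_{j=0}^{i-1} S^{j}\Phi_1\Big),
\end{equation}
and by the induction hypothesis $\Phi_1,\Phi_2$ — hence all the finitely many shifts $S^{j}\Phi_1, S^{i}\Phi_2$ — are Boolean combinations of timed tasks, so the right-hand side is as well. Since $hrz(\Phi)<\infty$ every unrolling is finite, so the induction on formula structure terminates; $\mathbf F$ and $\mathbf G$ are subsumed via their definitions in terms of $\mathbf U$ and $\neg$.

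For stage (ii), drive all negations onto timed-task leaves using De Morgan ($\neg(\psi_1\land\psi_2)\equiv\neg\psi_1\lor\neg\psi_2$, $\neg(\psi_1\lor\psi_2)\equiv\neg\psi_1\land\neg\psi_2$, $\neg\neg\psi\equiv\psi$). It remains to rewrite $\neg\langle\varphi,c,m\rangle_t$ positively. For each $j\in\mathcal J_c$ exactly one of $(\mathbf x_j,\cdot)\models\varphi$ and $(\mathbf x_j,\cdot)\models\neg\varphi$ holds (the inner logic is two-valued and $\neg\varphi$ is again an inner-logic formula by \eqref{eq:syntax-in}), so $n(\mathbf X,c,\varphi,\cdot)+n(\mathbf X,c,\neg\varphi,\cdot)=|\mathcal J_c|$. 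Hence $n(\mathbf X,c,\varphi,t_0+t)<m$ iff $n(\mathbf X,c,\neg\varphi,t_0+t)\ge|\mathcal J_c|-m+1$: when $m\le|\mathcal J_c|$ this is $\neg\langle\varphi,c,m\rangle_t\equiv\langle\neg\varphi,\,c,\,|\mathcal J_c|-m+1\rangle_t$, a timed task, and when $m>|\mathcal J_c|$ the task $\langle\varphi,c,m\rangle_t$ is unsatisfiable, so $\neg\langle\varphi,c,m\rangle_t\equiv True$ (an empty conjunction). After these substitutions the formula is a negation-free Boolean combination of timed tasks, and stage (iii) — repeatedly distributing conjunction over disjunction — puts it in the form $\bigvee_{k=1}^{K}\bigwedge_{i=1}^{I_k}\bar T_{i}^{k}$ (the vacuous case being an unsatisfiable $\Phi$, i.e.\ $K=0$). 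Specializing to $t_0=0$ gives the claim.

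\textbf{Main obstacle.} The delicate part is stage (i): defining the shift operator so it meshes with the induction hypothesis, writing the until-unrolling with the exact interval conventions of $\mathbf{U}_{[a,b]}$ (which endpoints of $[a,b]$ the witnessing time ranges over, and over which prefix $\Phi_1$ must hold), and checking the degenerate subcases ($a=0$ giving an empty prefix conjunction, etc.). The counting identity of stage (ii) and the distributive blow-up of stage (iii) are then routine.
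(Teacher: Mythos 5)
Your proof follows essentially the same route as the paper's sketch: reduce the outer formula to a DNF whose literals are timed tasks (the paper delegates this step to Algorithm~1 of STLnet, exploiting that the outer syntax is STL with tasks in place of predicates, whereas you carry out the finite unrolling explicitly by structural induction with a shift operator and the bounded until-expansion), and then eliminate negated tasks by complement counting over $\mathcal J_c$. One substantive difference in the counting step: under the paper's semantics $(\mathbf X,t)\models\langle\varphi,c,m\rangle$ iff $n(\mathbf X,c,\varphi,t)\geq m$, your identity $\neg\langle\varphi,c,m\rangle_t\equiv\langle\neg\varphi,c,|\mathcal J_c|-m+1\rangle_t$ is the correct one, while the paper states $\langle\neg\varphi,c,|\mathcal J_c|-m\rangle_t$ ("at least $|\mathcal J_c|-m$ agents violate $\varphi$"), which is off by one --- it only enforces $n(\mathbf X,c,\varphi,t)\leq m$ and thus does not exclude exactly $m$ satisfying agents; you also handle the degenerate case $m>|\mathcal J_c|$, which the paper does not mention. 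So your argument is correct and, if anything, tightens the paper's sketch rather than deviating from it.
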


\begin{proof}
[Sketch] Since the syntax of CaTL+ outer logic is identical with STL except that predicates are replaced by \emph{tasks}, we can follow Algorithm 1 in \cite{ma2020stlnet} to represent any CaTL+ formula into DNF form, where negations are only applied to (timed) tasks. For these negative tasks, we have: $(\mathbf X,t_0)\models\neg\langle\varphi,c,m\rangle_t$ is equivalent to $(\mathbf X,t_0)\models\langle\neg\varphi,c,|\mathcal J_c|-m\rangle_t$. In other words, ``no more than $m-1$ agents with capability $c$ satisfy $\varphi$" is equivalent to ``at least $|\mathcal J_c|-m$ agents with capability $c$ violate $\varphi$". Hence, we can get rid of all negations applied to tasks. 
\end{proof}

Let $sort(\cdot)$ reorder a sequence of scalars from largest to smallest and return the reordered index:
\vspace{-10pt}
\begin{equation}
    \label{eq:sort}
    sort([\eta_i]_{i=1}^N)=i_1,i_2,\ldots,i_N,\ s.t.\  \eta_{i_1}\geq\eta_{i_2}\geq\ldots\geq\eta_{i_N}.
\end{equation}
Next, we follow Alg.~\ref{alg:repair} to repair the output of CatlNet. We assume that an STL control synthesis algorithm is available. That is, given an STL formula $\varphi$ defined over an agent's individual trajectory, we can find the control sequence for the agent that steers it to satisfy the STL formula if a solution exists, denoted as $\mathbf x, \mathbf u\leftarrow syn(\varphi)$. We first rewrite the CaTL+ formula into its negation-free DNF $\Phi=\bigvee_{k=1}^K\bigwedge_{i=1}^{I_k}\bar T_{i}^k$. Then at the initial state, we predict the team trajectory using CatlNet and the system model. 
To satisfy $\Phi$, at least one of the $k$ clauses $\bigwedge_{i=1}^{I_k}\bar T_{i}^k$ needs to be satisfied. We calculate the robustness for all of them and consider these clauses from the highest robustness to the lowest (step 1). For the clause $\bigwedge_{i=1}^{I_k}\bar T_{i}^k$, if it is violated, we find all tasks $\bar T_i^k=\langle\varphi_i^k,c_i^k,m_i^k\rangle_{t_i^k}$ that are violated (or satisfied by exactly $m_i^k$ agents) and assign a STL formula $\mathbf F_{[t_i^k,t_i^k]}\varphi_i^k$ to enough ($m_i^k$) agents with the required capabilities $c_i^k$ (steps 2-6). 
Since we repair one agent at a time, those tasks satisfied by more than $m_i^k$ agents would not be violated. Now we have assigned a set of STL formulas to each agent that needs repair. Then we apply the STL control synthesis algorithm to make an agent's trajectory satisfy the conjunction of these formulas (step 8). After updating the trajectory, we find tasks satisfied by exactly $m_i^k$ agents and assign $\mathbf F_{[t_i^k,t_i^k]}\varphi_i^k$ to them again (steps 9-11). Repeat until all agents are repaired. If all agents get positive robustness, then the algorithm terminates and return \emph{success} (step 12). Otherwise redo all these steps for the next clause $\bigwedge_{i=1}^{I_k}\bar T_{i}^k$. If all clauses cannot be satisfied, then the algorithm terminates and return \emph{fail}. 

\RestyleAlgo{ruled}

\begin{algorithm2e}
\caption{Trajectory Repair}
\LinesNumbered
\label{alg:repair}
\KwIn{$\Phi=\bigvee_{k=1}^K\bigwedge_{i=1}^{I_k}\bar T_{i}^k$, $R_1=\cdots=R_{|J|}=\emptyset$, $F_1=\cdots=F_{|J|}=0$, $Result=fail$}
\For(\tcp*[f]{all clauses}){$k$ in $sort\big([\eta(\mathbf X,\bigwedge_{i=1}^{|I_k|}T_i^{k},0)]_{k=1}^{K}\big)$}{
\For(\tcp*[f]{all not (just) satisfied $\bar T_i^k$}){$\{i\in [1,I_k]\ |\ n(\mathbf X,c_i^{k},\varphi_i^k,t_i^k)\leq m_i^k)\}$}
{\For(\tcp*[f]{all agents with $c_i^k$}){$j^*$ in $sort\{\rho(x_j,\varphi_i^k,t_i^k)|j\in\mathcal J_{c_i^k}\}$}{$R_{j^*}\leftarrow R_{j^*}\cup\{i\}$;\tcp*[f]{assign task $i$ to agent $j^*$}\\ \lIf{$(\mathbf x_{j^*},t_i^k)\not\models\varphi_i^k$}{$F_{j^*}\leftarrow 1$; \tcp*[f]{flag agents that need repair} }
\lIf{task $T_i^k$ is assigned to $m_i^k$ agents}{break}}}(\tcp*[f]{assignment finished})
\For(\tcp*[f]{for all flagged agents}){$\{j\in\mathcal J|F_j=1\}$}{
$\mathbf x_j, \mathbf u_j\leftarrow syn(\bigwedge_{i\in R_j}\mathbf F_{[t_i^k,t_i^k]}\varphi_i^k)$; \tcp*[f]{get repaired trajectory}\\
\For{$\{i\in [1,I_k]\ |\ n(\mathbf X,c_i^{k},\varphi_i^k,t_i^k)= m_i^k)\}$}{\For{$j\in \mathcal J_{c_i^k}$}{
\lIf{$(\mathbf x_j,t_i^k)\models\varphi_i$}{$R_j\leftarrow R_j\cup \{i\}$ \tcp*[f]{redo assignment}}}}}
\lIf{all flagged agents get positive robustness}{$Result\leftarrow suc$, break}}
\Return $\mathbf X$, $\mathbf u_1, \cdots, \mathbf u_{|\mathcal J|}$, $Result$.
\end{algorithm2e}
\vspace{-5pt}

\begin{proposition}[soundness]
\label{prop:sound}
If Alg.~\ref{alg:repair} returns $Result=suc$, then the returned team trajectory $\mathbf X$ satisfies the CaTL+ specification: $(\mathbf X,0)\models\Phi$.
\end{proposition}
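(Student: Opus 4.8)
The plan is to reduce the claim to a single clause of the negation-free DNF. Write $\Phi=\bigvee_{k=1}^K\bigwedge_{i=1}^{I_k}\bar T_i^k$ using the negation-free DNF proposition above, and recall that $(\mathbf X,0)\models\bar T_i^k=\langle\varphi_i^k,c_i^k,m_i^k\rangle_{t_i^k}$ iff $n(\mathbf X,c_i^k,\varphi_i^k,t_i^k)\ge m_i^k$. Then it suffices to take $k$ to be the clause at which Alg.~\ref{alg:repair} sets $Result\leftarrow suc$ and show that the returned $\mathbf X$ satisfies every timed task $\bar T_i^k$ of that clause, since this gives $(\mathbf X,0)\models\bigwedge_{i=1}^{I_k}\bar T_i^k$, hence $(\mathbf X,0)\models\Phi$. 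First I would unpack the exit test: ``all flagged agents get positive robustness'' means (by soundness of the STL robustness $\rho$) that $syn$ succeeded for every flagged agent, so each such agent's final individual trajectory $\mathbf x_j$ satisfies $\mathbf F_{[t_i^k,t_i^k]}\varphi_i^k$, that is $(\mathbf x_j,t_i^k)\models\varphi_i^k$, for every $i\in R_j$; and once an agent is repaired, later (other-agent) repairs leave its trajectory alone, so this persists to the returned $\mathbf X$.

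Next I would fix a timed task $\bar T_i^k$ of the selected clause and split on $c:=n(\mathbf X,c_i^k,\varphi_i^k,t_i^k)$ evaluated just before that clause's first inner loop. If $c\le m_i^k$, task $i$ enters the first loop, which assigns it to the $m_i^k$ agents $S_i\subseteq\mathcal J_{c_i^k}$ of largest $\rho(\cdot,\varphi_i^k,t_i^k)$ (assuming $|\mathcal J_{c_i^k}|\ge m_i^k$, without which $\bar T_i^k$ is unsatisfiable). For each $j\in S_i$ we have $i\in R_j$ from then on; if $j$ is never flagged it is never repaired and, since its flagging test for task $i$ did not fire at assignment time, $(\mathbf x_j,t_i^k)\models\varphi_i^k$ already there and still in the returned $\mathbf X$; if $j$ is flagged (by task $i$ or any other) it is repaired in step~8 of the second loop with $i\in R_j$, so its final trajectory satisfies $\varphi_i^k$ at $t_i^k$ by the exit test. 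Hence all $m_i^k$ agents of $S_i$ satisfy $\varphi_i^k$ at $t_i^k$, that is $n(\mathbf X,c_i^k,\varphi_i^k,t_i^k)\ge m_i^k$.

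The substantive case is $c>m_i^k$: now task $i$ is ignored by the first loop and I must show the repairs never drop its count below $m_i^k$. I would induct over the sequence of single-agent repairs in the second loop, using: (i) one repair changes $c$ by at most $1$, since only the repaired agent's membership in $\{\,j\in\mathcal J_{c_i^k}:(\mathbf x_j,t_i^k)\models\varphi_i^k\,\}$ can change; and (ii) steps~9--11 run after every repair and, whenever a clause-$k$ task's count is exactly its threshold, add that task to the $R_j$ of every agent then satisfying it, so any later repair of such an agent preserves the task while already-repaired and never-flagged agents do not move. Together, $c$ stays above $m_i^k$ until it first equals $m_i^k$ (it cannot jump past, by (i)), at which moment (ii) ``locks'' all $m_i^k$ satisfiers and $c$ never decreases again. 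This closes the case; since $\bar T_i^k$ was arbitrary, every timed task of the selected clause is satisfied and the proof is done.

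I expect the over-satisfied case to be the main obstacle: one must rule out that greedily re-synthesizing a flagged agent to meet only the tasks in its $R_j$ silently breaks some other already-satisfied task and pushes its count below threshold. The induction goes through only because counts move by at most one per repair and steps~9--11 re-lock a task the instant it becomes tight; making this precise, and handling the side bookkeeping (how $R_j$ and the flags $F_j$ behave across iterations of the outer loop, and pinning down ``positive robustness'' through robustness soundness), is where the care lies.
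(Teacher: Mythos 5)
Your proposal is correct and follows essentially the same argument as the paper: reduce to the clause at which $Result$ is set to $suc$, then case-split on the pre-repair count of each timed task (assigned when $n\le m_i^k$, versus over-satisfied), using the facts that repairs are done one agent at a time (so the count drops by at most one) and that steps 9--11 re-assign a task to its satisfiers the moment its count becomes tight. Your treatment of the over-satisfied case is just a more explicit rendering of the paper's step 3 of its lemma.
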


\begin{proof}
Consider a clause $\bigwedge_{i=1}^{I_k}\bar T_{i}^k$. To satisfy it, all timed tasks $\bar T_{i}^k$ should be satisfied. We first prove a lemma: by repeating steps 7-11, if the STL control synthesis returns positive robustness for all flagged agents ($Result=suc$ in step 12), then the returned team trjactory $(\mathbf X, 0)\models \bar T_{i}^k$, $\forall i\in \{1,\ldots, I_k\}$. Proof of this includes three steps: 

\begin{enumerate}
    \item For tasks $\bar T_{i}^k$ that are violated by the before-repaired team trajectory, i.e., $\mathbf n(\mathbf X,c_i^k,\varphi_i^k,t_i^k)<m_i^k$, they are assigned to $m_i^k$ agents. For these agents, if the original $(\mathbf x_j,t_i^k)\models\varphi_i^k$ and $F_j=0$, $\mathbf x_j$ will remain unchanged. If $(\mathbf x_j,t_i^k)\models\varphi_i^k$ and $F_j=1$, $\mathbf x_j$ will be updated but since $i\in R_j$, $(\mathbf x_j,t_i^k)\models \varphi_i^k$ (equivalent to $(\mathbf x_j,0)\models F_{[t_i^k,t_i^k]}\varphi_i^k$) holds after repair. If $(\mathbf x_j,t_i^k)\not\models\varphi_i^k$ and $F_j=1$, $\mathbf x_j$ will be repaired to satisfy $\varphi_i^k$. To sum up, all these $m_i^k$ agents will satisfy $\varphi_i^k$ after repair, i.e., $(\mathbf X,0)\models\bar T_{i}^k$.
    \item For tasks $\bar T_{i}^k$ that are satisfied by exactly $m_i^k$ agents before repair, i.e., $\mathbf n(\mathbf X,c_i^k,\varphi_i^k,t_i^k)=m_i^k$ they are assigned to $m_i^k$ agents. Similar as above, for these agents, no matter $F_j=1$ or $F_j=0$, they will satisfy $\varphi_i^k$ after repair, i.e., $(\mathbf X,0)\models\bar T_{i}^k$.
    \item For tasks $\bar T_{i}^k$ that are satisfied by more than $m_i^k$ agents before repair, i.e., $\mathbf n(\mathbf X,c_i^k,\varphi_i^k,t_i^k)>m_i^k$, they are not assigned to any agents. However, we repair one agent at a time. Hence, $\mathbf n(\mathbf X,c_i^k,\varphi_i^k,t_i^k)$ decreases at most $1$. If $\mathbf n(\mathbf X,c_i^k,\varphi_i^k,t_i^k)=m_i^k$ after the repair of an agent, we assign $\varphi_i^k$ to $m_i^k$ agents. Hence, the satisfaction of these tasks holds after repair. 
\end{enumerate}
In conclusion, all tasks $\bar T_{i}^k$ are satisfied, $\forall i\in \{1,\ldots, I_k\}$. The lemma is proved. 

According to step 12, if Alg.~\ref{alg:repair} returns $Result=suc$, then for one of the clauses all flagged agents get positive robustness. Using the lemma, one of the clauses $\bigwedge_{i=1}^{I_k}\bar T_{i}^k$ is satisfied. Hence, the disjunction of them is satisfied. 

\end{proof}

\vspace{-3pt}
\begin{remark}
Alg.~\ref{alg:repair} ensures soundness (Proposition \ref{prop:sound}) but it is not complete. Here completeness means that if a solution (a team trajectory that satisfies the CaTL+ specification) exists, then the algorithm can find it. In other words, it is possible that Alg.~\ref{alg:repair} returns $fail$ though a solution exists. 
\end{remark}

\vspace{-5pt}
\begin{remark}
Alg.~\ref{alg:repair} requires the global information (states of all agents), so it is only applied in the centralized training phase. Hence, the satisfaction of the CaTL+ specification in the execution is not guaranteed. However, simulation results show that by using the repair scheme as a guidance for the training, CatlNet can reach a high satisfying rate in the execution even without the repair. 
\end{remark}

\section{Case Studies}
Consider the scenario and CaTL+ specification in Ex.~\ref{ex}. Let the dimension of the communication vector be $8$, which contains the history information of each agent. Let $\mathcal U_j=[-1,1]^2$, $j=1,2,3,4$, $\mathcal U_j=[-1.2,1.2]^2$, $j=5,6$.  We train CaTL+ using the algorithm described in Sec.~\ref{sec:train}.  A team trajectory generated by the CatlNet after first training (full communication and no repair) is shown in Fig.~\ref{fig:results}(b). The trajectory violates the CaTL+ specification, mostly due to $\Phi_5$, i.e., only one ground vehicle can be on the bridge at a given time. To avoid appearing on the bridge at the same time, the agents tend to go across the bridge at the edge of the bridge, which is a local optimum with zero robustness. We use Alg.~\ref{alg:repair} to repair the trajectory in Fig.~\ref{fig:results}(b) which results in Fig.~\ref{fig:results}(c) with positive robustness. Then we retain CatlNet with the dataset and repeat the above process, which results in a CatlNet (full communication) that can generate satisfying team trajectories starting from random initial states. The final dataset contains $213$ satisfying trajectories.

\begin{figure}
    \centering
    \subfigure{
    \label{fig:map}
    \includegraphics[width=0.2\textwidth]{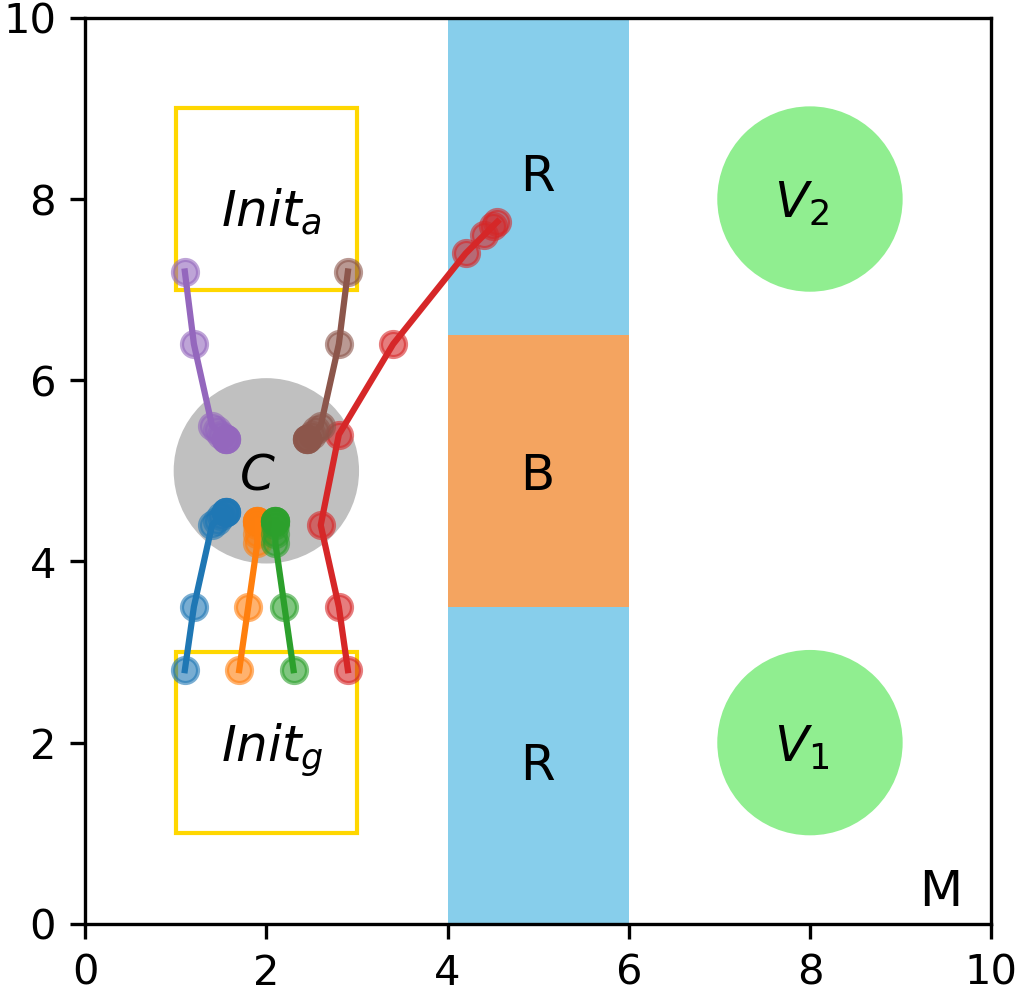}
    }
    \ 
    \subfigure{
    \label{fig:before_repair}
    \includegraphics[width=0.2\textwidth]{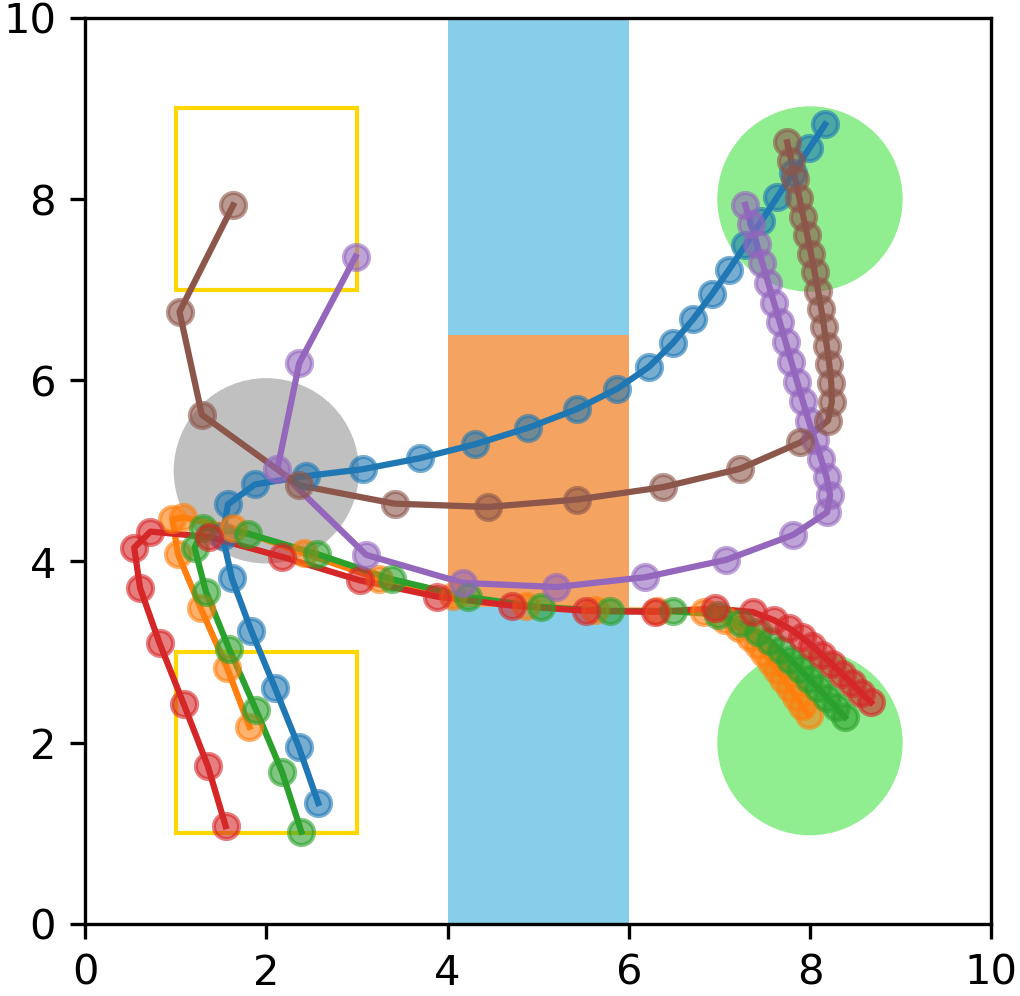}
    }
    \ 
    \subfigure{
    \label{fig:after_repair}
    \includegraphics[width=0.2\textwidth]{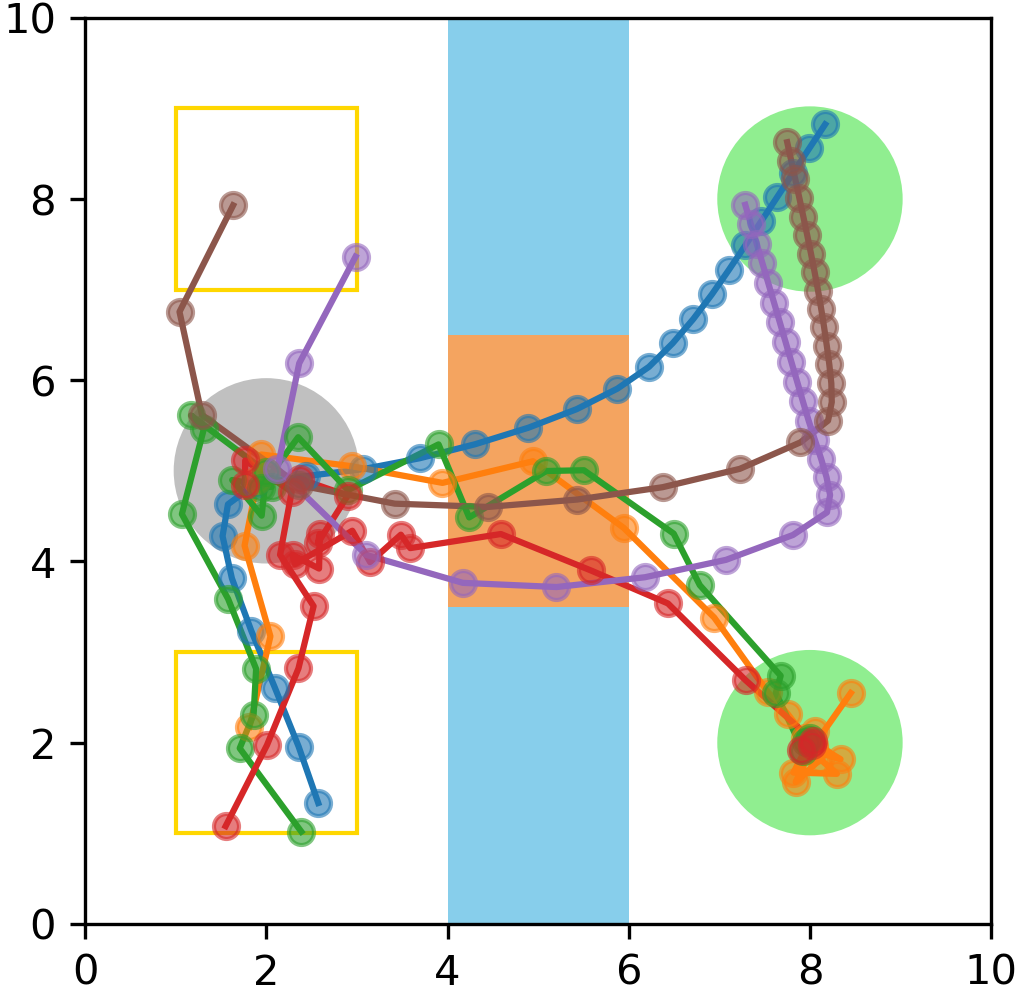}
    }
    \ 
    \subfigure{
    \label{fig:final}
    \includegraphics[width=0.2\textwidth]{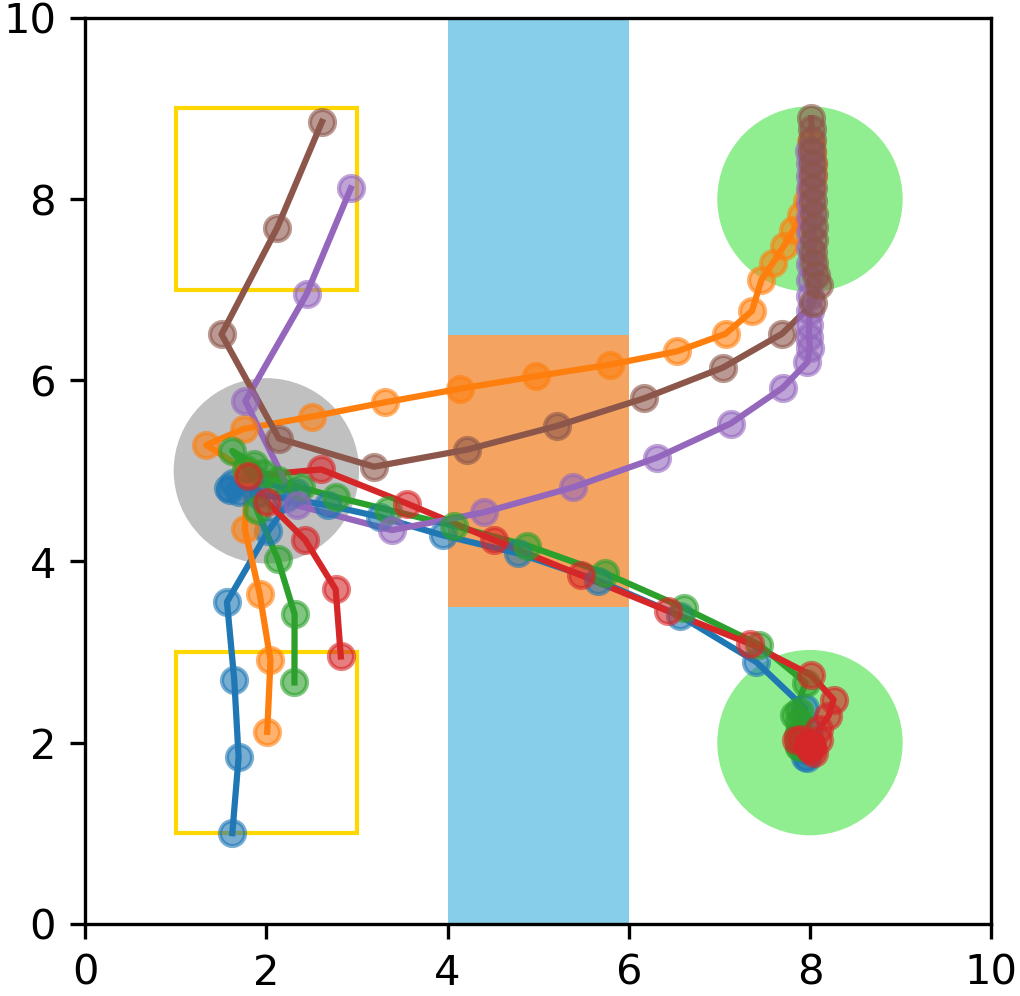}
    }
    \caption{\vspace{-4pt}\small (a) Environment and example trajectories. (b) Team trajectory before repair. (c) Team trajectory after repair. (d) Team trajectory generated by the final CatlNet.}
    \label{fig:results}
    \vspace{-8pt}
\end{figure}

\begin{figure}
\centering
\begin{minipage}[c]{0.45\textwidth}
    \centering
    \includegraphics[width=\textwidth]{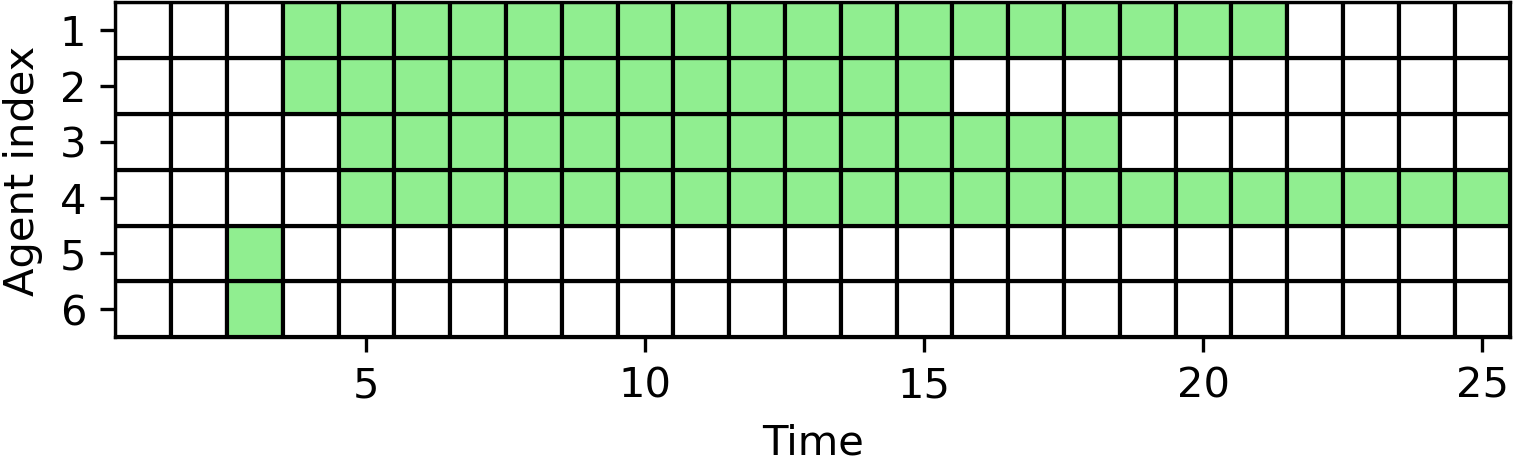}
\end{minipage}
\quad
\begin{minipage}[c]{0.48\textwidth}
    \centering
    \caption{\vspace{-20pt}\small Communication at each time step. Green squares mean the agent communicates at that time, while white squares indicate that the agent does not communicate. }
    \label{fig:comm}
\end{minipage}
\vspace{-8pt}
\end{figure}

Next, we train the Comm-gate and retrain the policy networks with it. A trajectory given by the final CatlNet is shown in Fig.~\ref{fig:results}(d). The corresponding communication at each time is shown in Fig.~\ref{fig:comm}. It can be seen that Comm-gate greatly reduces the total number of communications and the communication happens mainly when agents go across the river one by one. This makes sense as agents needs to behave differently at this stage of the task and communication enable them to do this. We test the final CatlNet from $10000$ random initial states and the success rate is $100.00\%$.

\section{Conclusion and Future Work}
We proposed a neural network-based model called CatlNet to learn both communication and distributed control policies from CaTL+ specifications. By using the repair algorithm during training, CatlNet can reach a high success rate of satisfying the specification. We plan to incorporate a lower level controller with CatlNet to avoid inter-agent collision and guarantee dense-time behaviors.

\newpage
DISTRIBUTION STATEMENT A. Approved for public release. Distribution is unlimited. This material is based upon work supported by the Under Secretary of Defense for Research and Engineering under Air Force Contract No. FA8702-15-D-0001. Any opinions, findings, conclusions or recommendations expressed in this material are those of the author(s) and do not necessarily reflect the views of the Under Secretary of Defense for Research and Engineering. \copyright 2022 Massachusetts Institute of Technology. Delivered to the U.S. Government with Unlimited Rights, as defined in DFARS Part 252.227-7013 or 7014 (Feb 2014). Notwithstanding any copyright notice, U.S. Government rights in this work are defined by DFARS 252.227-7013 or DFARS 252.227-7014 as detailed above. Use of this work other than as specifically authorized by the U.S. Government may violate any copyrights that exist in this work.

\bibliography{references}

\end{document}